\def\BibTeX{{\rm B\kern-.05em{\sc i\kern-.025em b}\kern-.08em
    T\kern-.1667em\lower.7ex\hbox{E}\kern-.125emX}}
\colorlet{linecol}{black!75}
\theoremstyle{plain}
\newtheorem{theorem}{Theorem}
\newtheorem*{theorem*}{Theorem}
\newtheorem{lemma}{Lemma}
\newtheorem*{lemma*}{Lemma}
\newtheorem*{proposition*}{Proposition}
\theoremstyle{definition}
\newtheorem{assumption}{Assumption}
\newtheorem{definition}{Definition}
\theoremstyle{remark}
\newcommand{\E}{\mathbb{E}}
\newcommand{\Var}{\mathbb{V}}
\newcommand{\mhat}{\widehat{\mathbf{m}}_{-k,t}}
\newcommand{\mhatt}{\widehat{\mathbf{m}}_t}
\newcommand{\sigmahat}{\widehat{\sigma}}
\newcommand{\xbf}{\mathbf{X}}
\newcommand{\xbfcap}{\underline{\mathbf{X}}}
\newcommand{\thbf}{\boldsymbol{\theta}}
\newcommand{\zbf}{\mathbf{Z}}
\newcommand{\zbfcap}{\underline{\mathbf{Z}}}
\newcommand{\mbf}{\mathbf{m}}
\begin{document}

\title{Distributed Stochastic Gradient Descent with Cost-Sensitive and Strategic Agents 
}

\author{\IEEEauthorblockN{Abdullah Basar Akbay, Cihan Tepedelenlioglu}
\IEEEauthorblockA{\textit{School of Electrical, Computer and Energy Engineering} \\
\textit{Arizona State University}\\
Tempe, Arizona \\
aakbay@asu.edu, cihan@asu.edu}
}

\maketitle

\begin{abstract}
This study considers a federated learning setup where cost-sensitive and strategic agents train a learning model with a server. During each round, each agent samples a minibatch of training data and sends his gradient update. As an increasing function of his minibatch size choice, the agent incurs a cost associated with the data collection, gradient computation and communication. The agents have the freedom to choose their minibatch size and may even opt out from training. To reduce his cost, an agent may diminish his minibatch size, which may also cause an increase in the noise level of the gradient update. The server can offer rewards to compensate the agents for their costs and to incentivize their participation but she lacks the capability of validating the true minibatch sizes of the agents. To tackle this challenge, the proposed reward mechanism evaluates the quality of each agent's gradient according to the its distance to a reference which is constructed from the gradients provided by other agents. It is shown that the proposed reward mechanism has a cooperative Nash equilibrium in which the agents determine the minibatch size choices according to the requests of the server.
\end{abstract}

%{\color{red}  
%\begin{IEEEkeywords}
%Keywords
%\end{IEEEkeywords}
%}

\section{Introduction}

Federated learning (FL) has emerged as a distributed computing paradigm where multiple \textit{agents} train a machine learning model in a collaborative manner under the coordination of a \textit{server}\footnote{We refer to the server as ``she" and an agent as ``he".} without transferring or disclosing their raw data \cite{Konecny45630}. The agents compute and communicate local updates (typically stochastic gradients) to the model, and then the server performs a global update  \cite{McMahan2018learning}. The agents can be different organizations, such as medical or financial corporations, which cannot share their confidential data with the server due to legal or business related concerns. Although FL has the potential to alleviate privacy risks associated with centralized machine learning, its operation counts on the assistance of the agents who incur privacy, computation, communication and energy costs for their efforts. Therefore, compensation of their losses in an effective fashion is vital to ensure their cooperation.

In this study, we consider strategic agents, i.e., they are self-interested and rational entities and they seek to maximize their utilities. During each round, each agent locally samples a minibatch of training data points, and sends their stochastic gradients. As a function of his minibatch size choice, each agent incurs a cost and the server compensates him through a reward mechanism. Note that the server cannot directly observe or verify the minibatch size choices of the agents. Therefore, an agent may attempt to reduce his cost by collecting less data points, while still claiming the same reward from the server. This can increase the noise levels of the stochastic gradients and can severely hamper the training efforts.

The described framework diverges from a classical FL setup where the participants are assumed to be submissive clients, who always carry out their dictated tasks according to a pre-established protocol. To address this challenge, we design a reward mechanism which constructs a reference gradient based on the stochastic gradients collected from every participating agent and then rewards each agent based on the distance between each agent's gradient update and the constructed reference gradient. We show that the proposed reward mechanism has a cooperative Nash equilibrium where the agents follow the lead of the server. This approach enables a feasible FL training in the presence of strategic agents by ensuring the reliability of local model updates.

% Thus, the major objectives of this study is to develop an effective reward mechanism to incentivize cost-sensitive agents and investigate its overall impact on the convergence performance of FL algorithms.

\subsection{Related Work}

There is a growing literature that focuses on game theoretic approaches and incentives for FL \cite{FengNiyato19, PandeyTran20, KangXiong19, DingFang21, Tang21, DongZhang20, DengLyu21, Tahanian21, FLEconSurvey22, Akbay22}. The authors in \cite{FengNiyato19} consider a Stackelberg game model where the agents determine the price per unit of their data, and the server determines the size of training data to be acquired from each client. In \cite{PandeyTran20}, a crowdsourcing framework is proposed to encourage the agents that improves their local accuracy. Contract based reward mechanisms are also studied for encouraging high-quality agent's participation in FL \cite{KangXiong19, DingFang21}. The authors in \cite{Tang21} study a similar framework from a public goods perspective where the agents are also interested in the accuracy of the trained model. Dong and Zhang \cite{DongZhang20} introduced a market oriented framework and formulated a hierarchical Stackelberg game. The authors in \cite{DengLyu21} propose a quality-aware incentive mechanism which is based on the estimates of the learning quality. A game-theoretical approach is adopted in \cite{Tahanian21} to classify the malicious agents by modeling the gradient aggregation process as a mixed-strategy game. A recent survey on the economic and game theoretic approaches in FL can be found in \cite{FLEconSurvey22}.

The work closest in spirit to ours is \cite{Akbay22}, where the authors model the interactions between the server and the agents as repeated games under the assumption that the server is not capable of directly authenticating the binary actions of the agents (cooperative or defective). While their setting is similar to ours, we impose a more general problem where the actions of the agents belong to a discrete set rather than a binary set. Furthermore, their approach is based on a zero-determinant strategy \cite{PressDyson12} which models the server as another selfish player, whereas we follow an incentive mechanism design approach.

\subsection{Notational Conventions}

All vectors are assumed to be column vectors. We use boldface type (e.g., $\mathbf{x}, \mathbf{X}$) to denote vectors. We use boldface type with an underscore to indicate a matrix ($\underline{\mathbf{x}}, \underline{\mathbf{X}}$). Random variables are upper case (e.g., $X, \mathbf{X}, \underline{\mathbf{X}}$). The transpose of a vector $\mathbf{x}$ is denoted by $\mathbf{x}^\top$. We denote the zero vector with $\boldsymbol{0}$. Given a vector $\mathbf{x}$, we denote by $\| \mathbf{x} \|$ its Euclidean norm, i.e., $\| \mathbf{x} \| = \sqrt{\mathbf{x}^\top \mathbf{x}}$. The variance of a random vector $\mathbf{X}$ is defined as $\Var[\mathbf{X}] := \E \left[ \| \mathbf{X} - \E[\mathbf{X}] \|^2 \right] = \E \left[\| \mathbf{X} \|^2 \right] - \left \| \E [\mathbf{X}] \right\|^2$. Given a set $\mathcal{S}$, we denote by $|\mathcal{S}|$ its cardinality.
\section{System Model}

This study considers a distributed implementation of stochastic gradient descent (SGD) algorithm \cite{Dean12} with a \textit{server} and a set of strategic \textit{agents} $\mathcal{K} = \{1, 2, \dots, K\}$. Let $\zbf \in \mathcal{Z}$ denote the data generated from some unknown underlying distribution $\mathcal{D}$. Consider a loss function $\ell: \mathcal{Z} \times \mathbb{R}^{d} \to \mathbb{R}_{\geq 0}$, where $\ell (\mathbf{z}, \thbf)$ measures the loss associated with a realization $\mathbf{z}$ of the data under the model parameter choice $\thbf$. The \textit{population loss} function, $L(\thbf)$, is defined by $L(\thbf) : = \E[\ell (\thbf;\zbf)] $ where the expectation is with respect to $\mathcal{D}$. The goal of the \textit{server} is to solve $\min_{\thbf} L(\thbf)$. We make the following assumptions on the cost function $\ell$ and population loss function $L$.
\begin{assumption}
$\ell(\mathbf{z}, \thbf)$ is continuously differentiable in $\thbf$ for any $\mathbf{z} \in \mathcal{Z}$ and $\nabla L (\thbf) = \nabla \E [\ell (\thbf; \zbf)] = \E \left[ \nabla_{\thbf} \ell (\thbf; \zbf) \right]$ holds.
\end{assumption} 

As illustrated in Figure~\ref{Fig:ParameterServerArchitecture}, at the beginning of each round $t$, the server broadcasts the latest model iterate $\thbf_{t-1}$ to the agents who then decide how many data points they will collect in order to produce a stochastic gradient for round $t$. The agent $k$ collects a minibatch of size $S_{k,t} \in \mathcal{S} := \{0, 1, \dots, n\}$ which consists of independent and identically distributed (i.i.d.) data points $\zbfcap_{k,t} := [\zbf_{k,t,1}\ \dots \ \zbf_{k,t,S_{k,t}}]$ drawn from $\mathcal{D}$, computes the gradient of the local empirical loss
\begin{equation} \label{Eq:XDef}
\xbf_{k,t} := \frac{1}{S_{k,t}} \sum\nolimits_{i=1}^{S_{k,t}} \nabla_{\thbf} \ \ell \big( \thbf_{t-1}; \zbf_{k,t,i} \big), \quad S_{k,t} > 0,
\end{equation}
and sends $\xbf_{k,t}$ to the server. The agent does not exchange his local data $\zbfcap_{k,t}$ with other users or the server. The agent can decline to participate in any given round ($S_{k,t} = 0$) and $\xbf_{k,t} = \bot$ indicates ``nonparticipation". 

\begin{figure}
\centering
\begin{tikzpicture}
	\tikzstyle{server}=[ellipse, 
						draw, 
						thick,
						minimum width = 35mm,
						minimum height = 10mm,
						inner sep=2pt]

	\tikzstyle{agent}=[rectangle, 
						rounded corners,
						draw, 
						thick, 
						minimum width = 5mm,
						minimum height = 6mm,
						inner sep=4pt]
						
	\tikzstyle{dataStream}=[cylinder,
							shape border rotate=90,
							aspect=0.25,
							draw,
							thick,
							inner sep = 2pt]
	
	\node[agent] (Agent1) {Agent 1};				
	\node[agent, right of=Agent1, node distance=18mm] (Agent2) {Agent 2};
	\node[text width=3cm, right of=Agent2, node distance=25mm] {\Large ...};
	\node[agent, right of=Agent2, node distance=25mm] (Agentk) {Agent $k$};
	\node[text width=3cm, right of=Agentk, node distance=25mm]  {\Large ...};
	\node[server, above right of=Agent2, node distance=2.5cm] (server) {\Large Server};
	\node[agent, right of=Agentk, node distance=25mm] (AgentLast) {Agent $K$};
	\node[dataStream, below of=Agent1, align=center, node distance = 18mm] (DataStream1) {Data\\ Stream 1};
	\node[dataStream, below of=Agent2, align=center, node distance = 18mm] (DataStream2) {Data\\ Stream 2};
	\node[text width=3cm, right of=DataStream2, node distance=25mm] {\Large ...};
	\node[dataStream, below of=Agentk, align=center, node distance = 18mm] (DataStreamk) {Data\\  Stream $k$};
	\node[text width=3cm, right of=DataStreamk, node distance=25mm] {\Large ...};
	\node[dataStream, below of=AgentLast, align=center, node distance = 18mm] (DataStreamLast) {Data\\ Stream $K$};

	\draw[dashed, -{stealth}, blue, very thick] (server.180) to node[near end, left]{$\theta_t \hspace{1mm}$} (Agent1.130);	
	\draw[-{stealth}, red, very thick] (Agent1.90) to node[near end, right]{$\hspace{2mm} \xbf_{1,t}$} (server.185);
	
	\draw[dashed, -{stealth}, blue, very thick] (server.210) to node[near end, left]{$\theta_t \hspace{1mm}$} (Agent2.80);	
	\draw[-{stealth}, red, very thick] (Agent2.50) to node[near end, right]{$\hspace{1mm} \xbf_{2,t}$} (server.225);
	
	\draw[dashed, -{stealth}, blue, very thick] (server.300) to node[near end, left]{$\theta_t$} (Agentk.85);	
	\draw[-{stealth}, red, very thick] (Agentk.50) to node[near end, right]{$\xbf_{k,t}$} (server.320);

	\draw[dashed, -{stealth}, blue, very thick] (server.350) to node[near end, left]{$\theta_t \hspace{1mm}$} (AgentLast.70);	
	\draw[-{stealth}, red, very thick] (AgentLast.40) to node[near end, above]{$\hspace{2mm} \xbf_{K,t}$} (server.355);
	
	\draw[-{stealth}, very thick] (DataStream1.90) to node[left]{$ \zbfcap_{1,t}$}(Agent1.270);
	\draw[-{stealth}, very thick] (DataStream2.90) to node[left]{$ \zbfcap_{2,t}$}(Agent2.270);
	\draw[-{stealth}, very thick] (DataStreamk.90) to node[left]{$ \zbfcap_{k,t}$}(Agentk.270);
	\draw[-{stealth}, very thick] (DataStreamLast.90) to node[left]{$ \zbfcap_{K,t}$}(AgentLast.270);

\end{tikzpicture}
 \vspace{-4mm}
\caption{At the beginning each round $t$, the server broadcasts the current iterate $\thbf_{t-1}$. Each agent locally samples a minibatch of training data points $\zbfcap_{k,t}$ and computes his stochastic gradient $\xbf_{k,t}$. The server aggregates the gradients to update the global training model as $\thbf_{t}$.}
\label{Fig:ParameterServerArchitecture}
\end{figure}
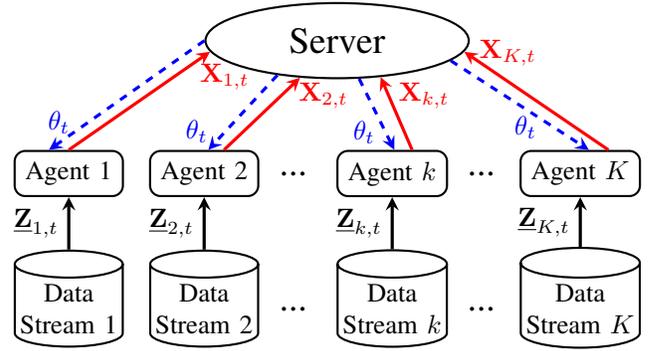

As a function of the minibatch size choice $S_{k,t}$, agent $k$ incurs a \textit{cost} captured by $h: \mathbb{R}_{\geq 0} \to \mathbb{R}_{\geq 0}$. In addition to the time and power expenses due to computation and transmission of the stochastic gradient $\xbf_{k,t}$ \eqref{Eq:XDef} and its communication to the server, $h(S_{k,t})$ may also incorporate the effort of data acquisition and associated privacy risks for the agent. Formally, we make the following assumptions on $h$.

\begin{assumption} \label{Assumption:CostFunction}
The cost function $h(\cdot)$ is strictly increasing with $h(0) = 0$, and it is the same for every agent. Further, $h(\cdot)$ is twice differentiable and convex on $\mathbb{R}_{>0}$.
\end{assumption}

The gradient of the population loss, $\nabla L(\thbf)$, is referred to as the \textit{population gradient}. Let $\mbf_t$ denote the population gradient evaluated at the current iterate
\begin{equation} 
    \mbf_t :=  \nabla L(\thbf_{t-1}) = \E [\nabla_{\thbf} \ell (\thbf_{t-1}; \zbf )]. \label{Eq:mtdef}
\end{equation}
The stochastic gradients can be regarded as unbiased noisy estimates of $\mbf_t$, i.e., $\E \left[ \xbf_{k,t}  |  \thbf_{t-1}, S_{k,t} \right] = \mbf_t$. Let $\mathcal{K}_t$ denote the set of participating agents, i.e., $\mathcal{K}_t := \{k \in \mathcal{K} : \xbf_{k,t} \neq \bot \}$. When $\mathcal{K}_t \neq \emptyset$, the server forms an unbiased estimate of the population gradient, $\mhatt$, and updates the model iterate as
\begin{align} \label{Eq:SGDBasic}
	\thbf_{t} = \thbf_{t-1} - \eta_t \mhatt(\xbfcap_t) \text{ and } \mhatt (\xbfcap_t) = \frac{1}{|\mathcal{K}_t|} \sum_{k \in \mathcal{K}_t} \xbf_{k,t},
\end{align}
where $\eta_t \! \geq \! 0$ is a stepsize and $\xbfcap_t =\! [\xbf_{1,t} \dots \xbf_{K,t}]$ is a matrix that collects the gradients sent by the agents in round $t$. If $\mathcal{K}_t  = \emptyset$, the server does not update the parameter, $\thbf_t = \thbf_{t-1}$.

\section{Problem Formulation}

\begin{figure}[!t]
\centering
\includegraphics[width=3.4in]{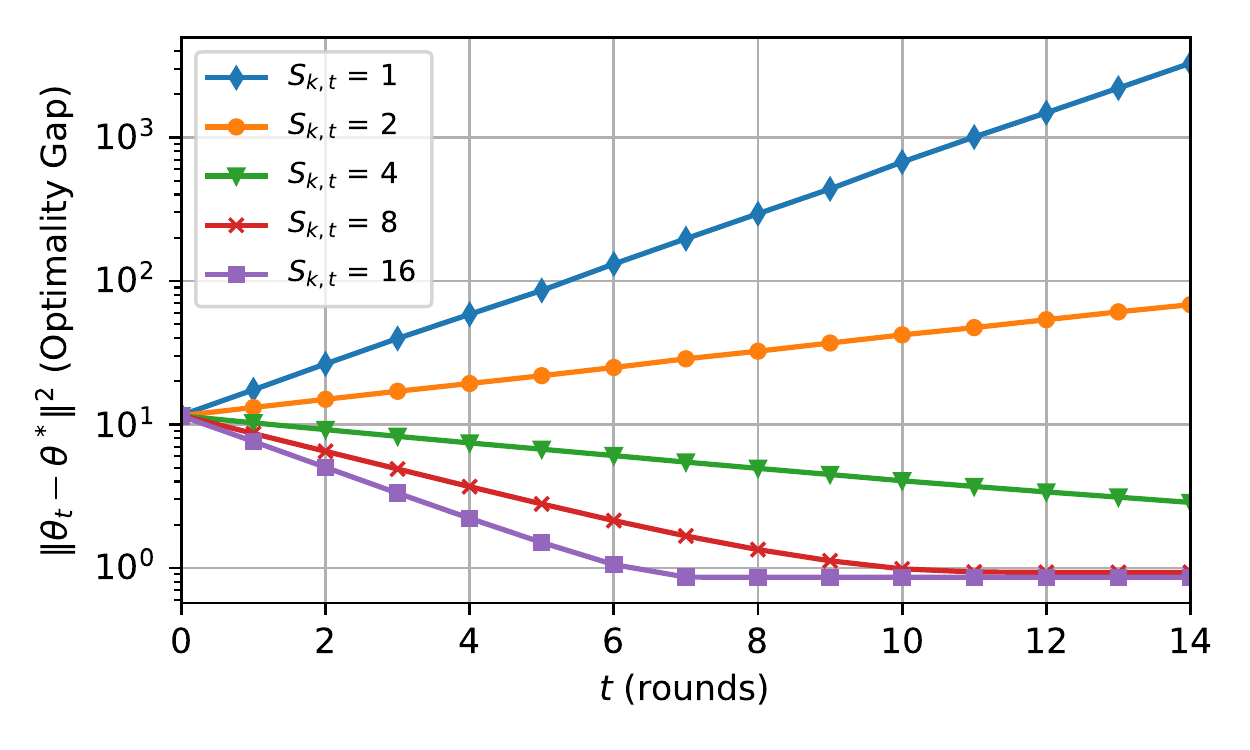}
\vspace{-4mm}
\caption{Linear regression experiments to demonstrate the impact of strategic minibatch size choices of agents on the convergence performance of SGD. }
\label{Fig:ProblemFormulation}
\end{figure}

In a conventional SGD based training, an increase in the mini-batch size allows an increase in the stepsize due to the reduced variance of the stochastic gradients and yields faster convergence \cite{Bottou18}. However, in this study, the strategic agents have the freedom to choose their minibatch sizes, $S_{k,t}$'s, and their participation is voluntary. Consequently, the variance of the stochastic gradients, denoted by $\Var [\xbf_{k,t}]$, also depend on the minibatch size choices of the agents:
\begin{align*}
\Var [\xbf_{k,t} | \thbf_{t-1}, S_{k,t}] \!=\! \Var \left[ \frac{1}{S_{k,t}} \sum_{i=1}^{S_{k,t}} \nabla_{\thbf} \ell \big( \thbf_{t-1}; \zbf_{k,t,i} \big) \right] \!=\! \frac{\sigma_t^2}{S_{k,t}}
\end{align*}
where $\sigma_t^2$ is the variance of a single sample stochastic gradient, i.e., $\sigma_t^2 = \E \left[ \| \nabla_{\thbf} \ell (\thbf_{t-1}; \zbf) - \mbf_{t}\|^2 \right]$. This observation implies that the variance of the gradient estimator $\mhatt$ can vary based on the minibatch size choices of the strategic agents.

The lack of direct control over the minibatch size choices presents a serious challenge for the server in the selection of the stepsizes throughout the training. In Figure~\ref{Fig:ProblemFormulation}, we present the results for a linear regression problem to demonstrate the possible adverse consequences of this issue. The setup of these experiments is detailed in Section~\ref{Section:Experiments}. In this experiment, the stepsize $\eta_t$ is selected assuming that the agents select $S_{k,t} = 8$ in every iteration. Selection of minibatch sizes smaller than $8$ can hinder convergence or even cause the algorithm to diverge. On the other hand, we do not really observe a much faster convergence when the agents select $S_{k,t}$'s larger than $8$ because this information is not available to the server and the stepsize is not adjusted to enjoy the benefit of variance reduction.

To overcome the described challenge, we will follow a game-theoretic approach where the agents are incentivized to follow a server-requested minibatch size sequence $\nu_t$ where they all set $S_{k,t} \!=\! \nu_t$. Let $\boldsymbol{\nu}_t \! := \! [\nu_t \dots \nu_t]_{K \times 1}$. The server employs a reward mechanism, $R_{k,t} (\xbfcap_t) : \mathcal{X}^K \to \mathbb{R}$, where $\mathcal{X} := \mathbb{R}^d \cup \{\bot\}$, and $R_{k,t}$ specifies the amount of reward for agent $k$ in round $t$. Note that $R_{k,t}$ is a function of the gradients sent by the agents in round $t$. Let $\mathbf{R}_t  = [R_{1,t}\ \dots \ R_{K,t}]$. It is assumed that nonparticipating agents do not receive any reward, i.e., $R_{k,t} = 0$ if $\xbf_{k,t} = \bot$. Since the agents are strategic, each one of them aims to maximize his individual expected utility which can be calculated as the difference between the expected reward and cost:
\begin{align} \label{Eq:UtilityAgent}
	u_{k,t} (\mathbf{S}_t) := \E \left[ R_{k,t} \ |\ \thbf_{t-1}, \mathbf{S}_t \right] - h(S_{k,t}).
\end{align}
The expectation is with respect to the data distribution $\mathcal{D}$, and it is conditioned on the current iterate $\thbf_{t-1}$ and the minibatch size vector.

We call an agent's minibatch size choice $S_{k,t}$ \textit{cooperative} if $S_{k,t} = \nu_t$. Each agent is free to choose his minibatch size $S_{k,t}$ and will only make a cooperative choice if it maximizes their expected utility \eqref{Eq:UtilityAgent}. Due to the form of the server's reward mechanism, the optimal choice for agent $k$ can only be described given the choices of agents other than $k$. Thus, we will adopt \textit{Nash equilibrium} as our main game-theoretic solution concept.

\begin{definition} 
Denote the minibatch sizes of all agents but $k$ by $\mathbf{S}_{-k,t} = [S_{1,t}, \dots, S_{k-1,t}, S_{k+1,t}, \dots, S_{K,t}]$. Agent $k$'s \textit{best response} to the minibatch size vector $\mathbf{S}_{-k,t}$ is $s_{k,t}^* \in \mathcal{S}$ such that
\begin{equation*}
u_{k,t} (S_{k,t} = s_{k,t}^*, \mathbf{S}_{-k,t}) \geq u_{k,t} (S_{k,t} = s_{k,t}, \mathbf{S}_{-k,t})
\end{equation*}
for all $s_{k,t} \in \mathcal{S}$. A minibatch size vector $\mathbf{S}_{t}$ is a \textit{Nash equilibrium} if, for all agents $k$, $S_{k,t}$ is a best response to $\mathbf{S}_{-k,t}$. We say that a reward mechanism $\mathbf{R}_t$ has a \textit{\textbf{cooperative equilibrium}} if $\mathbf{S}_t = \boldsymbol{\nu}_t$ is a Nash equilibrium.
\end{definition}

Our goal is to incentivize agents with the reward mechanism to make cooperative choices throughout the learning process. In a cooperative equilibrium, this objective is achieved as no agent can benefit from unilaterally changing his cooperative choice, given that other agents are making cooperative choices. We would also like the reward mechanism to have high budgetary efficiency. We say that a reward mechanism $\mathbf{R}_t$ is \textit{\textbf{budget balanced}} if no agent is rewarded more than his cost in a cooperative equilibrium:
\begin{align} \label{Eq:IRDef}
    \E \left[R_{k,t} \ |\ \thbf_{t-1}, \mathbf{S}_t = \boldsymbol{\nu}_t \right] = h(\nu_t), 
\end{align}
Recall that the reward, utility and cost of a nonparticipating agent is equal to 0. Since the agents can always choose not to participate $S_{k,t} = 0$, the expected reward of an agent cannot be less than his cost $h(\nu_t)$ in a cooperative equilibrium.

\section{Reward Mechanism Design}

We propose an output agreement\footnote{\textit{Output agreement} is a term first introduced in \cite{vonAhn04}, for an image labeling game. It captures the notion of rewarding an agent only if his answer is the same as that of another randomly picked agent.} based reward mechanism where the reward of an agent is determined based on the distance between the gradient of the agent and the sample mean vector of the gradients that are sent by other agents. In particular, the proposed reward mechanism, for $\xbf_{k,t} \neq \bot$ and $\mathcal{K}_{-k,t} \neq \emptyset$, has the following form:
\begin{align} \label{Eq:RewardMechanismKnownSigma}
R_{k,t} \!=\! h(\nu_t) \!+\! h'(\nu_t) \left( \nu_t  \frac{K}{K\!-\!1} \!-\! \frac{\nu_t^2}{\sigma_t^2} \| \xbf_{k,t} \!-\! \mhat \|^2 \right),
\end{align}
where $\mhat$ is formally defined as
\begin{align}
\mhat := \frac{1}{|\mathcal{K}_{-k,t}|} \sum\nolimits_{i \in \mathcal{K}_{-k,t}} \xbf_{i, t}.
\end{align}
If an agent does not participate in distributed learning during round $t$ ($X_{k,t} = \bot$ and $S_{k,t} = 0$), then his reward is equal to $0$. An agent also receives zero reward if he is the only participant ($\mathcal{K}_{-k,t} = \emptyset$).

Ideally, the server would want to determine the reward of agent $k$ according to the distance between his gradient $X_{k,t}$ with the population gradient $\mbf_t$ \eqref{Eq:mtdef} since the expected value of this distance would be inversely proportional to the minibatch size choice of the agent:
\begin{align} \label{Eq:GenieAidedMech}
\E \left [ \| \xbf_{k,t} \!-\! \mbf_t \|^2 | \thbf_{t-\!1}, S_{k,t} \right] \!=\! \Var [\xbf_{k,t} | \thbf_{t-1}, S_{k,t}] \!=\! \frac{\sigma_t^2}{S_{k,t}}. \!
\end{align}
In this scenario, the expected reward of agent $k$ would increase with his minibatch size choice $S_{k,t}$ and this would enable the server to incentivize cooperative strategies. Nevertheless, the population gradient $\mbf_t$ is unknown to the server, and she instead forms an estimate of it using the gradients of other agents. The following lemma finds the expected value of the distance between $\xbf_{k,t}$ and this estimate $\mhat$.
\begin{lemma} \label{Lemma:OAMech}
Conditioned on a minibatch size vector $\mathbf{S}_t$ such that $\mathbf{S}_{-k,t} \neq \boldsymbol{0}$ and $S_{k,t} \neq 0$, it follows that
\begin{align}
\E \big[ \| \xbf_{k,t} - &  \mhat \|^2 \ | \ \thbf_{t-1}, \mathbf{S}_t \big] = \nonumber \\
& = \frac{\sigma_t^2}{S_{k,t}} + \frac{1}{|\mathcal{K}_{-k,t}|^2} \sum\nolimits_{i\in \mathcal{K}_{-k,t}} \frac{\sigma_t^2}{S_{i,t}}.
\end{align}
\end{lemma}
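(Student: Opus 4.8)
The plan is to exploit two structural facts about the quantities involved: that $\xbf_{k,t}$ and $\mhat$ are conditionally independent given $\thbf_{t-1}$ and $\mathbf{S}_t$, and that both are unbiased estimates of the population gradient $\mbf_t$. First I would observe that, once we condition on $\mathbf{S}_t$, the participating set $\mathcal{K}_{-k,t} = \{i \neq k : S_{i,t} > 0\}$ is deterministic, so $\mhat$ is a fixed linear combination of the gradients $\{\xbf_{i,t}\}_{i \in \mathcal{K}_{-k,t}}$. Since each agent draws his minibatch independently from $\mathcal{D}$, the gradient $\xbf_{k,t}$, which is a function of agent $k$'s data alone, is independent of $\mhat$, which is a function of the other agents' data. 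Moreover, by the unbiasedness relation $\E[\xbf_{i,t} \mid \thbf_{t-1}, S_{i,t}] = \mbf_t$ from the system model, both $\E[\xbf_{k,t}] = \mbf_t$ and $\E[\mhat] = \mbf_t$, so the difference $\xbf_{k,t} - \mhat$ has mean $\boldsymbol{0}$.

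Next I would invoke the variance definition from the notational conventions, $\Var[\mathbf{X}] = \E[\|\mathbf{X} - \E[\mathbf{X}]\|^2]$, to rewrite the target quantity. Because $\xbf_{k,t} - \mhat$ is zero-mean, its expected squared norm equals its variance, $\E[\|\xbf_{k,t} - \mhat\|^2] = \Var[\xbf_{k,t} - \mhat]$. Expanding the squared norm of the sum of the two centered vectors and taking expectation, the cross term factors as the inner product of two independent zero-mean expectations and therefore vanishes, which yields the additive decomposition $\Var[\xbf_{k,t} - \mhat] = \Var[\xbf_{k,t}] + \Var[\mhat]$.

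Finally I would evaluate the two terms separately. The first is $\Var[\xbf_{k,t} \mid \thbf_{t-1}, S_{k,t}] = \sigma_t^2 / S_{k,t}$, exactly the identity already computed in the problem formulation. For the second, the same zero-cross-covariance argument applied across the distinct agents in $\mathcal{K}_{-k,t}$ gives $\Var[\mhat] = |\mathcal{K}_{-k,t}|^{-2} \sum_{i \in \mathcal{K}_{-k,t}} \Var[\xbf_{i,t}]$, and substituting $\Var[\xbf_{i,t}] = \sigma_t^2 / S_{i,t}$ produces the stated sum. Adding the two pieces gives the claimed identity. The only step demanding genuine care is the vanishing of the cross-covariance terms, both between $\xbf_{k,t}$ and $\mhat$ and among the distinct $\xbf_{i,t}$ in the average; this rests entirely on the assumption that minibatches are sampled independently across agents together with the unbiasedness of each gradient, so I would make that independence explicit before invoking additivity of variance.
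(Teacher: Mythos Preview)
Your proposal is correct and follows essentially the same approach as the paper: both center the gradients at $\mbf_t$, use the conditional independence of the agents' minibatches to kill the cross terms, and then sum the per-agent variances $\sigma_t^2/S_{i,t}$. The only cosmetic difference is that the paper carries out the centering and expansion explicitly via $\dot{\xbf}_{k,t} := \xbf_{k,t} - \mbf_t$, whereas you phrase the same computation in the language of variance additivity for independent zero-mean vectors.
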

The proof of this Lemma is relegated to appendix\footnote{For Appendices, we refer the interested reader to the full version of our paper (available on the authors' webpages).}. In contrast to \eqref{Eq:GenieAidedMech}, the expected value of $\| \xbf_{k,t} - \mhat \|^2$ is dependent on the minibatch size choices of all participating agents. The following result demonstrates that the best minibatch size choice for agent $k$ is to be cooperative, $S_{k,t} = \nu_t$, given that other agents are also cooperative. 

\begin{theorem} \label{Thm}
The output agreement based reward mechanism \eqref{Eq:RewardMechanismKnownSigma} has a cooperative Nash equilibrium and it is budget balanced.
\end{theorem}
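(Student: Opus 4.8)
The plan is to fix an arbitrary agent $k$, assume every other agent plays cooperatively ($S_{i,t} = \nu_t$ for all $i \neq k$, with $\nu_t > 0$), and show that agent $k$'s best response is $S_{k,t} = \nu_t$. When the $K-1$ other agents are cooperative they all participate, so $\mathcal{K}_{-k,t} = \mathcal{K} \setminus \{k\}$ and $|\mathcal{K}_{-k,t}| = K-1$. Substituting $S_{i,t} = \nu_t$ into Lemma~\ref{Lemma:OAMech} collapses the sum, since each of the $K-1$ terms equals $\sigma_t^2/\nu_t$, giving
\[
\E\big[\|\xbf_{k,t} - \mhat\|^2 \mid \thbf_{t-1}, \mathbf{S}_t\big] = \frac{\sigma_t^2}{S_{k,t}} + \frac{1}{K-1}\,\frac{\sigma_t^2}{\nu_t}.
\]

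Next I would substitute this into the reward \eqref{Eq:RewardMechanismKnownSigma}, take the expectation, and subtract the cost $h(S_{k,t})$ to form the utility \eqref{Eq:UtilityAgent}. The factor $\nu_t^2/\sigma_t^2$ cancels the $\sigma_t^2$'s, and the term $h'(\nu_t)\nu_t K/(K-1)$ combines with the $-h'(\nu_t)\nu_t/(K-1)$ arising from the second variance term to leave exactly $h'(\nu_t)\nu_t$. The expected utility as a function of agent $k$'s own choice thus reduces to
\[
u_{k,t}(S_{k,t}) = h(\nu_t) + h'(\nu_t)\nu_t - h'(\nu_t)\frac{\nu_t^2}{S_{k,t}} - h(S_{k,t}), \qquad S_{k,t} > 0,
\]
with $u_{k,t} = 0$ for the nonparticipation choice $S_{k,t} = 0$. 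Evaluating at $S_{k,t} = \nu_t$ gives $u_{k,t}(\nu_t) = 0$, equivalently $\E[R_{k,t} \mid \thbf_{t-1}, \mathbf{S}_t = \boldsymbol{\nu}_t] = h(\nu_t)$, which is exactly the budget-balanced condition \eqref{Eq:IRDef}.

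It then remains to verify that $\nu_t$ maximizes $u_{k,t}$ over $\mathcal{S}$. I would relax $S_{k,t}$ to a continuous variable $s > 0$ and differentiate: $u_{k,t}'(s) = h'(\nu_t)\nu_t^2/s^2 - h'(s)$. The first term is strictly decreasing in $s$ and the second is nondecreasing (by convexity of $h$, Assumption~\ref{Assumption:CostFunction}), so $u_{k,t}'$ is strictly decreasing and has a unique zero; that zero is $s = \nu_t$, since $u_{k,t}'(\nu_t) = h'(\nu_t) - h'(\nu_t) = 0$. Hence $u_{k,t}$ is unimodal with its unique maximum at $s = \nu_t$, so $\nu_t$ also maximizes over the integer grid $\{1,\dots,n\}$. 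Finally, since the maximized value $u_{k,t}(\nu_t) = 0$ equals the nonparticipation utility, the choice $S_{k,t} = \nu_t$ is at least as good as every alternative in $\mathcal{S}$, making it a best response; as $k$ was arbitrary, $\boldsymbol{\nu}_t$ is a Nash equilibrium.

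The main obstacle is the global-optimality step: the first-order condition alone only certifies a stationary point, so the argument hinges on using convexity of $h$ to show that $u_{k,t}'$ crosses zero exactly once, which upgrades a local maximum to a global one over the discrete feasible set. A secondary point worth stating explicitly is that cooperation ties with opting out (both yield utility $0$), so $\nu_t$ is only a \emph{weak} best response; this is fully consistent with the Nash equilibrium definition, which requires only a weak inequality, but should be flagged to forestall confusion.
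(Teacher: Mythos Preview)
Your proposal is correct and follows essentially the same route as the paper's proof: apply Lemma~\ref{Lemma:OAMech} with $\mathbf{S}_{-k,t}=\boldsymbol{\nu}_{-k,t}$ to reduce the expected utility to $h(\nu_t)+\nu_t h'(\nu_t)(1-\nu_t/S_{k,t})-h(S_{k,t})$, then use convexity of $h$ to show this is maximized at $S_{k,t}=\nu_t$ with value $0$, yielding both the Nash equilibrium and budget balance. The only cosmetic difference is that the paper verifies concavity via the second derivative $J''(s)<0$, whereas you argue directly that $u_{k,t}'$ is strictly decreasing; both arguments are equivalent here.
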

The proof of this Theorem is relegated to appendix. According to Theorem~\ref{Thm}, the proposed reward mechanism can incentivize the agents to follow the requested minibatch size sequence $\nu_t$ without rewarding the agents more than their cost, despite the fact that the server can never verify the minibatch size choices of the agents.

The proposed output agreement mechanism still requires the knowledge of $\sigma_t^2 = \E \left[ \| \nabla_{\thbf} \ell (\thbf_{t-1}; \zbf) - \mbf_{t}\|^2 \right]$, the variance of a single sample stochastic gradient. In a real-life machine learning setting, this information may not be available to the server. Instead, we can take the same approach as in the derivation of the proposed mechanism and construct an estimate of $\widehat{\sigma}_{-k,t}^2$ from the stochastic gradients collected from agents:
\begin{align} \label{Eq:VarEstDef}
\sigmahat_{-k,t}^2 = \frac{\nu_t}{|\mathcal{K}_{-k,t}|-1} \sum\nolimits_{i \in \mathcal{K}_{-k,t}} \| \xbf_{i,t} - \mhat \|^2.
\end{align}
With this modification, The theoretical analysis of the reward mechanism becomes significantly more complicated and it is deferred to the expanded journal version of our study. Nonetheless, in the next section, we will provide experimental evaluation of the proposed reward mechanism with this modification. 

\begin{figure}[!t]
\centering
\includegraphics[width=3.4in]{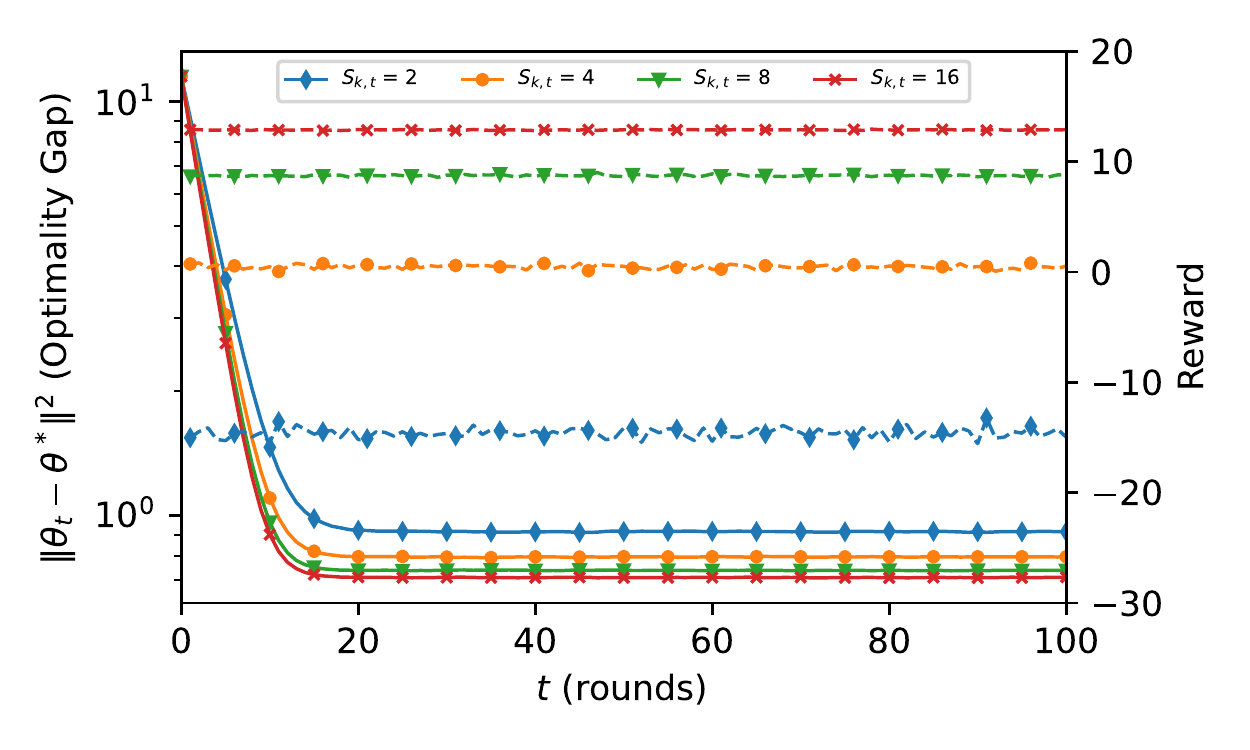}
\vspace{-4mm}
\caption{Linear regression experiments to demonstrate the performance of the proposed reward mechanism for $\nu_t = 8$.}
\label{Fig:03}
\end{figure}

\begin{figure}[!t]
\centering
\includegraphics[width=3.4in]{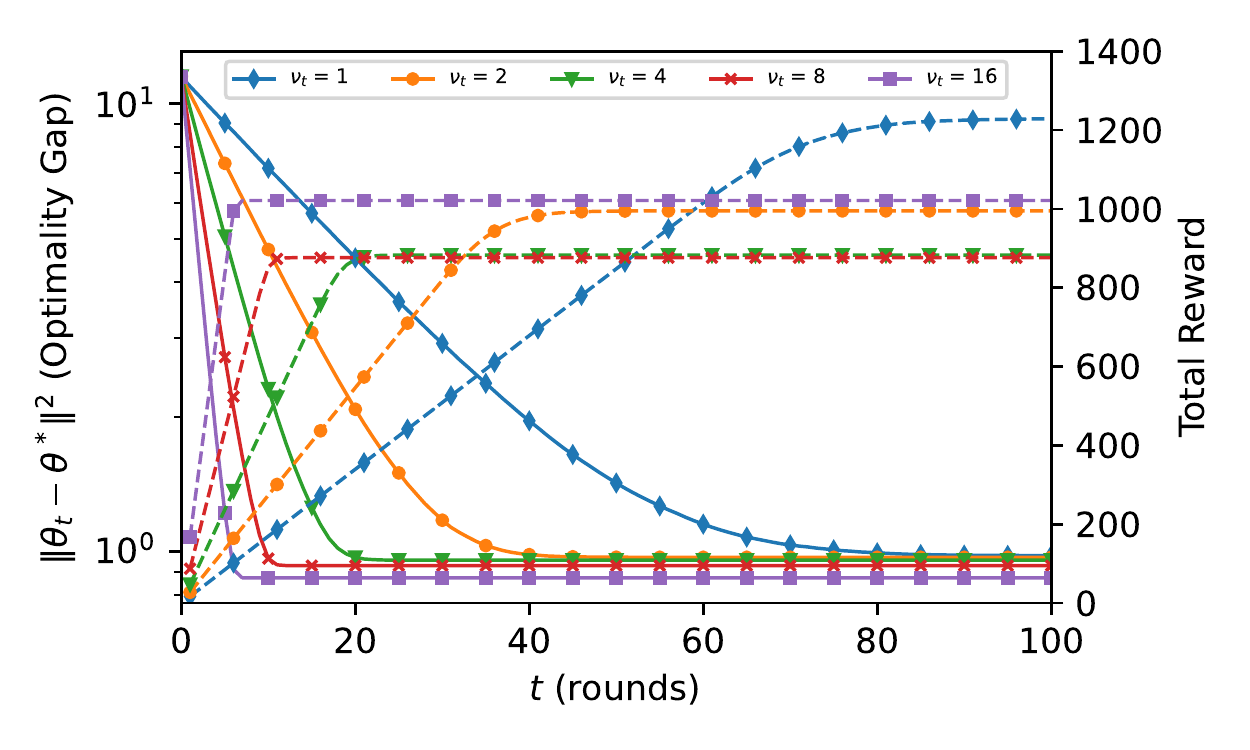}
\vspace{-4mm}
\caption{Linear regression experiments to show the impact of different $\nu_t$ choices on the convergence of the algorithm and the total distributed reward.}
\label{Fig:04}
\end{figure}

\begin{figure}[!t]
\centering
\includegraphics[width=3.4in]{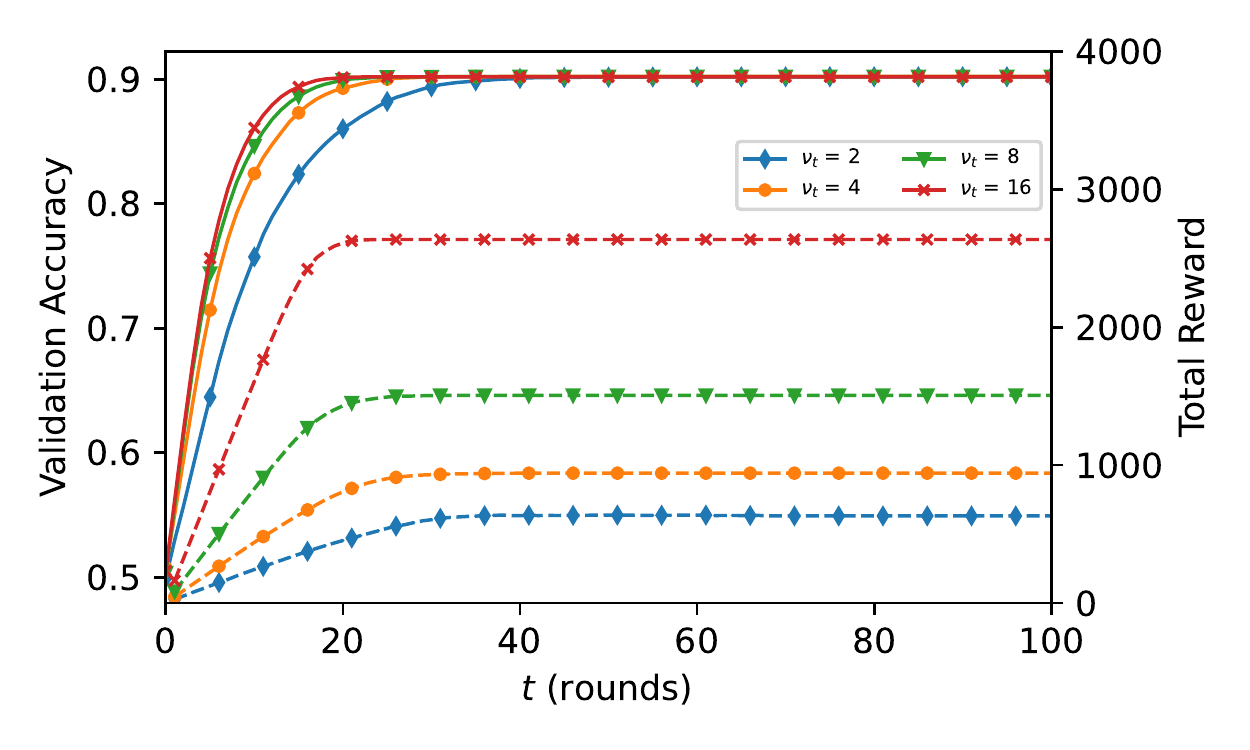}
\vspace{-4mm}
\caption{Logistic regression experiments to show the impact of different $\nu_t$ choices on the convergence of the algorithm and the total distributed reward.}
\label{Fig:05}
\end{figure}

\section{Experiments} \label{Section:Experiments}

In this section, we evaluate the performance of the proposed output agreement based reward mechanism using a synthetically generated dataset and a real-life dataset. In our first set of experiments, we consider a linear regression problem. In each round, agent $k$ collects a mini-batch of $S_{k,t}$ i.i.d. data points, $\zbf_{k,t,i} = (V_{k,t,i}, \ \mathbf{Y}_{k,t,i})$ for $i \in \{ 1, \dots, S_{k,t}\}$, where $\mathbf{Y}_{k,t, i} \in \mathbb{R}^d$ is the feature vector with $\mathbf{Y}_{k,t,i} \stackrel{\mathclap{\normalfont\mbox{i.i.d.}}}{\sim} \mathcal{N}(\boldsymbol{0}, \boldsymbol{\mathbf{I}})$. Each label $V_{k,t,i}$ is generated as $V_{k,t, i} = \mathbf{Y}_{k,t, i}^\top \thbf^* + N_{k,t, i}$ where $\thbf^* \in \mathbb{R}^d$ and $ N_{k,t,i} \stackrel{\mathclap{\normalfont\mbox{i.i.d.}}}{\sim} \mathcal{N}(0, 1)$. A squared error loss is considered, i.e., $\ell (\thbf; Z) = \ell (\thbf; (V, \mathbf{Y})) = \left( \mathbf{Y}^\top \thbf - V \right)^2$. The number of agents is $K=10$ and the dimension is $d=100$. In our second set of experiments, we consider a logistic regression problem trained on the 2020 annual CDC survey data of 400k adults related to their health status \cite{kaggle_dataset}. In both of these problems, the cost function of the agents is $h(S_{k,t}) = S_{k,t} + 1$. Each set of experiments is repeated 250 times. Further details are relegated to appendix.

Fig.~\ref{Fig:03} depicts the reward of agent $k$ for different minibatch size choices, $S_{k,t} \in \{ 2, 4, 8, 16 \}$, when $\nu_t = 8$ and the other agents are cooperative ($S_{i,t} = \nu_t$ for $i \neq k$). The results validate Theorem~\ref{Thm}. In particular, when the agent follows the request of the server ($S_{k,t} \!=\! \nu_t \!= \!8$), his reward is equal to his cost ($R_{k,t}\!=\!9$), which verifies the budget balanced property of the proposed reward mechanism. Choosing other minibatch size values do not increase his utility as his reward is less than his cost. Consequently, it is verified that the proposed reward mechanism has a cooperative equilibrium, since $S_{k,t} = \nu_t$ is the best response minibatch size choice when the other agents follow the server.

Fig.~\ref{Fig:04} depicts the total reward distributed by the server for different requested minibatch size values, $\nu_t \in \{1, 2, 4, 8, 16\}$, given that the agents are cooperative. In this set of experiments, the training stops when the SGD algorithm converges. Afterwards, the total distributed reward remains constant.  The goal is to show the impact of different $\nu_t$ values on the convergence of the SGD algorithm and the total distributed reward. For $\nu_t=1$, the server distributes smaller rewards ($R_{k,t}=2$) within each round; however, the total distributed reward ends up higher than the other $\nu_t$ values because decreasing $\nu_t$ slows down the convergence. Setting $\nu_t = 16$ yields the fastest convergence performance; however, the total distributed reward per round also increases significantly ($R_{k,t}=17$). Considering these two opposing effects, an intermediate value of $\nu_t=4$ or $\nu_t=8$ are appropriate choices for the server. 

For the logistic regression problem, Fig.~\ref{Fig:05} depicts the accuracy of the classifier on the validation data for different $\nu_t$ values. Similar to linear regression problem, increasing $\nu_t$ yields faster convergence. In contrary to the linear regression problem, in this setup, setting $\nu_t = 2$ yields the lowest amount of total distributed rewards at the end of the training. It indicates that further investigation is required to understand the impact of $\nu_t$ choice on the trade-off between total reward and convergence rate.

\section{Conclusion}
In this work, a federated learning framework is studied where cost-sensitive and strategic agents train a learning model with a server. The main objective of the server is to ensure the reliability of the local model updates of the agents by distributing rewards. The challenge in such a problem is that the gradient updates of the agents is not directly verifiable: The server cannot directly observe or estimate the sample size of the data points that are being used in the computation of the stochastic gradients. To address this challenge, we propose an output agreement based reward mechanism that evaluates each agent's update according to its distances to the gradients submitted by other agents. We show that the proposed reward mechanism has a cooperative Nash equilibrium and satisfies budget balanced property. Our empirical results verify our theoretical analysis and shows that the server can control the convergence performance of the algorithm by varying the reward mechanism parameter $\nu_t$.

The proposed reward mechanism \eqref{Eq:RewardMechanismKnownSigma} requires the knowledge of $\sigma_t^2$, the variance of a single sample stochastic gradient. The server can construct an estimate of this unknown parameter using the gradients sent by the agents as in \eqref{Eq:VarEstDef}. The journal version of our study will include the game theoretical guarantees for this version of the reward mechanism. Furthermore, given an accuracy target, the characterization of an optimal $\nu_t$ sequence which minimizes the total distributed rewards, is still an open problem. 

\bibliographystyle{IEEEtran}
\bibliography{main}
\numberwithin{equation}{section}
\newpage
\onecolumn
\appendices
\section{Proof of Lemma~\ref{Lemma:OAMech}} \label{Sect:AppLemma2}

\begin{lemma*}
Conditioned on a strategy vector $\mathbf{S}_t$ such that $\mathbf{S}_{-k,t} \neq \boldsymbol{0}$ and $S_{k,t} \neq 0$, it follows that
\begin{equation*}
\E \left[ \| \xbf_{k,t} -\mhat \|^2_2 \ \big | \ \mathbf{S}_t, \thbf_{t-1} \right] = \sigma_t^2 \left( \frac{1}{S_{k,t}} + \frac{1}{|\mathcal{K}_{-k,t}|^2} \sum_{i\in \mathcal{K}_{-k,t}} \frac{1}{S_{i,t}} \right) .
\end{equation*}
\end{lemma*}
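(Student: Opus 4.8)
The plan is to exploit two structural facts established earlier: distinct agents draw their minibatches independently (and do not share data), so conditioned on $\thbf_{t-1}$ and $\mathbf{S}_t$ the gradients $\{\xbf_{i,t}\}_{i \in \mathcal{K}}$ are mutually independent; and each is an unbiased estimator of the population gradient with $\E[\xbf_{i,t} \mid \thbf_{t-1}, S_{i,t}] = \mbf_t$ and conditional variance $\Var[\xbf_{i,t} \mid \thbf_{t-1}, S_{i,t}] = \sigma_t^2 / S_{i,t}$, as derived in the Problem Formulation section. The key observation that drives the whole computation is that $\xbf_{k,t}$ and the reference $\mhat$ are independent, because $\mhat$ is assembled solely from the gradients of agents in $\mathcal{K}_{-k,t}$, none of which is agent $k$.

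First I would center the difference. Since $\E[\mhat \mid \thbf_{t-1}, \mathbf{S}_t] = \mbf_t$ as well, the vector $\xbf_{k,t} - \mhat$ has zero conditional mean, so $\E[\|\xbf_{k,t} - \mhat\|^2 \mid \thbf_{t-1}, \mathbf{S}_t] = \Var[\xbf_{k,t} - \mhat]$. Writing $\xbf_{k,t} - \mhat = (\xbf_{k,t} - \mbf_t) - (\mhat - \mbf_t)$ and expanding the squared Euclidean norm produces two variance terms together with a cross term $-2\,\E[(\xbf_{k,t} - \mbf_t)^\top (\mhat - \mbf_t)]$. By independence of $\xbf_{k,t}$ and $\mhat$, this cross term factors as $\E[\xbf_{k,t} - \mbf_t]^\top \E[\mhat - \mbf_t] = \boldsymbol{0}^\top \boldsymbol{0} = 0$ and hence vanishes, leaving $\E[\|\xbf_{k,t} - \mhat\|^2 \mid \thbf_{t-1}, \mathbf{S}_t] = \Var[\xbf_{k,t}] + \Var[\mhat]$.

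Next I would evaluate the two variances separately. The first is immediate, $\Var[\xbf_{k,t} \mid \thbf_{t-1}, S_{k,t}] = \sigma_t^2 / S_{k,t}$. For the second I would write $\mhat - \mbf_t = |\mathcal{K}_{-k,t}|^{-1} \sum_{i \in \mathcal{K}_{-k,t}} (\xbf_{i,t} - \mbf_t)$, expand $\|\mhat - \mbf_t\|^2$ into the double sum over $i, j \in \mathcal{K}_{-k,t}$ of inner products $(\xbf_{i,t} - \mbf_t)^\top (\xbf_{j,t} - \mbf_t)$, and take conditional expectations. Every off-diagonal term ($i \neq j$) vanishes because distinct agents' centered gradients are independent and zero-mean, leaving only the diagonal and giving $\Var[\mhat] = |\mathcal{K}_{-k,t}|^{-2} \sum_{i \in \mathcal{K}_{-k,t}} \sigma_t^2 / S_{i,t}$. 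Adding the two contributions yields exactly the claimed identity.

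I do not expect a genuine obstacle here; the argument is careful bias--variance bookkeeping rather than anything deep. The only point demanding attention is the justification that every cross term truly vanishes, and this rests entirely on the mutual independence of the agents' minibatches conditioned on $\thbf_{t-1}$ and $\mathbf{S}_t$, combined with unbiasedness. Once these are invoked, the remaining manipulations are routine expansions of Euclidean norms, and the restriction $S_{k,t} \neq 0$, $\mathbf{S}_{-k,t} \neq \boldsymbol{0}$ in the hypothesis simply guarantees that both $\xbf_{k,t}$ and $\mhat$ are well defined.
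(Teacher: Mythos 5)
Your proposal is correct and follows essentially the same route as the paper's proof: center both $\xbf_{k,t}$ and $\mhat$ at $\mbf_t$, expand the squared norm, and observe that all cross terms vanish so the expectation splits into $\sigma_t^2/S_{k,t}$ plus the averaged variance of the reference. The only cosmetic difference is that you invoke full conditional independence of the agents' gradients, while the paper verifies only the pairwise uncorrelatedness $\E\bigl[\dot{\xbf}_{k_1,t}^\top \dot{\xbf}_{k_2,t}\bigr] = 0$, which is all the computation actually requires.
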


\begin{proof}

Let
\begin{align*}
\dot{\xbf}_{k,t} := \xbf_{k,t} - \mbf_t.
\end{align*}
Recall that the stochastic gradients are unbiased noisy copies of $\mbf_t$ and it follows that
\begin{align*}
\E \left[\dot{\xbf}_t | \thbf_{t-1}, S_{k,t} \right] & = \boldsymbol{0} \quad \text{and} \quad \E \left[ \| \dot{\xbf}_{k,t} \|^2 | \thbf_{t-1}, S_{k,t} \right] = \frac{\sigma_t^2}{S_{k,t}}.
\end{align*}
Further, note that $\dot{\xbf}_{k_1,t}$ and $\dot{\xbf}_{k_2,t}$ are uncorrelated, for $k_1 \neq k_2$, conditioned on $\thbf_{t-1}$:
\begin{align*}
\E \left[\dot{\xbf}_{k_1,t}^\top \dot{\xbf}_{k_2,t} \big | \ \thbf_{t-1}, S_{k_1,t}, S_{k_2,t} \right] & = \frac{1}{S_{k_1,t}} \frac{1}{S_{k_2,t}} \sum_{i=1}^{S_{k_1, t}} \sum_{j=1}^{S_{k_2, t}} \left( \E \left[ \nabla_{\thbf} \ \ell \big( \thbf_{t-1}; \zbf_{k_1,t,i} \big) - \mbf_t \right] \right)^\top \E \left[ \nabla_{\thbf} \ \ell \big( \thbf_{t-1}; \zbf_{k_2,t,i} \big) - \mbf_t \right] \\
& = 0.
\end{align*}
Then, 
\begin{align*}
\E \left[ \| \xbf_{k,t} -\mhat \|^2 \ \big | \ \mathbf{S}_t, \thbf_{t-1} \right] & = \E \left[ \left \| \xbf_{k,t} - \frac{1}{|\mathcal{K}_{-k,t}|} \sum\nolimits_{i \in \mathcal{K}_{-k,t}} \xbf_{i,t} \right \|^2 \ \bigg | \ \mathbf{S}_t, \thbf_{t-1}  \right] \\
& =  \E \left[ \left \| \dot{\xbf}_{k,t} - \frac{1}{|\mathcal{K}_{-k,t}|} \sum\nolimits_{i \in \mathcal{K}_{-k,t}} \dot{\xbf}_{i,t} \right \|^2 \ \bigg | \ \mathbf{S}_t, \thbf_{t-1}  \right] \\
& = \E \left[ \| \dot{\xbf}_{k,t} \|^2 \ \big| \ \thbf_{t-1}, S_{k,t} \right] +  \frac{1}{|\mathcal{K}_{-k,t}|^2} \sum\nolimits_{i \in \mathcal{K}_{-k,t}} \E \left[ \| \dot{\xbf}_{k,t} \|^2 \ \big| \ \thbf_{t-1}, S_{i,t} \right] \\
& = \frac{\sigma_t^2}{S_{k,t}} + \frac{1}{|\mathcal{K}_{-k,t}|^2} \sum\nolimits_{i \in \mathcal{K}_{-k,t}} \frac{\sigma_t^2}{S_{i,t}}.
\end{align*}

\end{proof}
\section{Proof of Theorem~\ref{Thm}}

\begin{theorem*}
The output agreement based reward mechanism \eqref{Eq:RewardMechanismKnownSigma} has a cooperative Nash equilibrium and it is budget balanced.
\end{theorem*}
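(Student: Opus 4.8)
The plan is to establish the two asserted properties — budget balancedness and the existence of a cooperative Nash equilibrium — as separate consequences of Lemma~\ref{Lemma:OAMech}, since both reduce to evaluating the expected reward \eqref{Eq:RewardMechanismKnownSigma} under cooperative play.

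First I would verify budget balancedness. Setting $\mathbf{S}_t = \boldsymbol{\nu}_t$ means every agent participates, so $|\mathcal{K}_{-k,t}| = K-1$ and every $S_{i,t} = \nu_t$. Substituting into Lemma~\ref{Lemma:OAMech} gives $\E[\|\xbf_{k,t} - \mhat\|^2] = \frac{\sigma_t^2}{\nu_t}(1 + \frac{1}{K-1}) = \frac{\sigma_t^2}{\nu_t}\frac{K}{K-1}$. Plugging this into the bracketed term of \eqref{Eq:RewardMechanismKnownSigma}, the two contributions $\nu_t\frac{K}{K-1}$ and $\frac{\nu_t^2}{\sigma_t^2}\cdot\frac{\sigma_t^2}{\nu_t}\frac{K}{K-1}$ cancel exactly, leaving $\E[R_{k,t}] = h(\nu_t)$, which is precisely \eqref{Eq:IRDef}.

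Next, to show $\boldsymbol{\nu}_t$ is a Nash equilibrium, I would fix an agent $k$, assume all others cooperate ($S_{i,t}=\nu_t$ for $i\neq k$, so $|\mathcal{K}_{-k,t}|=K-1$), and compute $k$'s expected utility as a function of his own choice $S_{k,t}>0$. Lemma~\ref{Lemma:OAMech} now yields $\E[\|\xbf_{k,t}-\mhat\|^2] = \frac{\sigma_t^2}{S_{k,t}} + \frac{1}{K-1}\frac{\sigma_t^2}{\nu_t}$; after substitution into \eqref{Eq:RewardMechanismKnownSigma} and cancellation of the $\frac{\nu_t}{K-1}$ terms, the utility \eqref{Eq:UtilityAgent} collapses to the clean form $g(S_{k,t}) := h(\nu_t) + h'(\nu_t)(\nu_t - \frac{\nu_t^2}{S_{k,t}}) - h(S_{k,t})$. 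The key step is then to maximize $g$. Relaxing $S_{k,t}$ to a continuous variable $s>0$, I would compute $g'(s) = h'(\nu_t)\frac{\nu_t^2}{s^2} - h'(s)$, which vanishes at $s=\nu_t$, and $g''(s) = -2h'(\nu_t)\frac{\nu_t^2}{s^3} - h''(s) < 0$ by Assumption~\ref{Assumption:CostFunction} ($h$ strictly increasing and convex). Hence $g$ is strictly concave and $s=\nu_t$ is its unique maximizer, so within the discrete set $\mathcal{S}\setminus\{0\}$ the best participating choice is $S_{k,t}=\nu_t$, at which $g(\nu_t)=0$.

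The main obstacle I anticipate is the non-participation edge case, which must be handled to conclude a genuine best response rather than merely an interior optimum. Since a nonparticipating agent receives $R_{k,t}=0$ and incurs $h(0)=0$, the choice $S_{k,t}=0$ also yields utility $0 = g(\nu_t)$. Thus cooperation ties with opting out and strictly dominates every other participating choice, so $S_{k,t}=\nu_t$ is indeed a best response to cooperative opponents; as $k$ was arbitrary, $\boldsymbol{\nu}_t$ is a Nash equilibrium. One should also note that the derivation of the simplified utility implicitly requires $\mathcal{K}_{-k,t}\neq\emptyset$, which holds here because $K\geq 2$ and the opponents cooperate; this is what makes the reference gradient $\mhat$ well defined and the cancellations valid.
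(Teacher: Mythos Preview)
Your proposal is correct and follows essentially the same route as the paper: both invoke Lemma~\ref{Lemma:OAMech} to reduce the expected utility under cooperative opponents to the one-variable function $h(\nu_t)+h'(\nu_t)\nu_t(1-\nu_t/s)-h(s)$, then use the sign of its second derivative (via Assumption~\ref{Assumption:CostFunction}) to identify $s=\nu_t$ as the unique interior maximizer, and finally compare with the zero utility from nonparticipation. The only cosmetic difference is that you verify budget balance first as a separate computation, whereas the paper reads it off from $u_{k,t}(\boldsymbol{\nu}_t)=0$ at the end.
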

\begin{proof}
Under the proposed output agreement based reward mechanism \eqref{Eq:RewardMechanismKnownSigma}, the expected utility of an agent is given by
\begin{align*}
u_{k,t} (\mathbf{S}_t) & = \E \left[ R_{k,t} \ |\ \thbf_{t-1}, \mathbf{S}_t \right] - h(S_{k,t}) \\
& = \E \left[ h(\nu_t) + h'(\nu_t) \left( \nu_t  \frac{K}{K-1} - \frac{\nu_t^2}{\sigma_t^2} \| \xbf_{k,t} - \mhat \|^2 \right) \ \bigg | \ \thbf_{t-1}, \mathbf{S}_t \right] - h(S_{k,t}) \\
& = h(\nu_t) +  h'(\nu_t) \left( \nu_t  \frac{K}{K-1} - \nu_t^2 \left( \frac{1}{S_{k,t}} + \frac{1}{|\mathcal{K}_{-k,t}|^2} \sum\nolimits_{i \in \mathcal{K}_{-k,t}} \frac{1}{S_{i,t}} \right) \right) - h(S_{k,t}),
\end{align*}
where the last equality follows from Lemma~\ref{Lemma:OAMech}. Let $\boldsymbol{\nu}_{-k,t} := [\nu_t \dots \nu_t]_{K-1 \times 1}$. It follows that
\begin{align*}
u_{k,t} \left( S_{k,t}, \mathbf{S}_{-k,t} = \boldsymbol{\nu}_{-k,t} \right) & = h(\nu_t) +  h'(\nu_t) \left( \nu_t  \frac{K}{K-1} - \nu_t^2 \left( \frac{1}{S_{k,t}} + \frac{1}{(K-1)^2} \sum_{i =1, i\neq k}^K \frac{1}{\nu_t} \right) \right) - h(S_{k,t}) \\
& = h(\nu_t) + \nu_t h'(\nu_t) \left( 1 - \frac{\nu_t}{S_{k,t}} \right) - h(S_{k,t}).
\end{align*}
For $s > 0$, an auxiliary function $J(s)$ is defined as:
\begin{align*}
J (s) = h(\nu_t) + \nu_t h'(\nu_t) \left( 1 - \frac{\nu_t}{s} \right) - h(s).
\end{align*}
According to Assumption~\ref{Assumption:CostFunction}, for $s > 0$, $h$ is twice differentiable and $h''(s) > 0$. Thus,
\begin{align*}
J'(s) = \nu_t^2 h'(\nu_t) \frac{1}{s^2} - h'(s) \quad \text{and} \quad
J''(s) = - 2\nu_t^2 h'(\nu_t) \frac{1}{s^3} - h''(s) < 0. 
\end{align*}
It follows that $s = \nu_t$ is the unique maximizer of $J(s)$. Thus, $S_{k,t} = \nu_t$ is the unique maximizer of $u_{k,t} \left( S_{k,t}, \mathbf{S}_{-k,t} = \boldsymbol{\nu}_{-k,t} \right)$ for $S_{k,t} \in \{1, \dots, n\}$. Note that 
\begin{align*}
u_{k,t} (\mathbf{S}_t = \boldsymbol{\nu}_t) = u_{k,t} (S_{k,t} = 0, \mathbf{S}_{-k,t} = \boldsymbol{\nu}_{-k,t}) = 0. 
\end{align*}
Consequently, 
\begin{align*}
u_{k,t} (\mathbf{S}_t = \boldsymbol{\nu}_t) \geq u_{k,t} (S_{k,t} = s_{k,t}, \mathbf{S}_{-k,t} = \boldsymbol{\nu}_{-k,t})
\end{align*}
for all $s_{k,t} \in \{0, 1, \dots, n\}$ and $\mathbf{S}_t = \boldsymbol{\nu}_t$ is a Nash equilibrium. Since $u_{k,t} (\mathbf{S}_t = \boldsymbol{\nu}_t) = 0$, the output agreement based reward mechanism \eqref{Eq:RewardMechanismKnownSigma} satisfies the budget balanced property \eqref{Eq:IRDef}.
\end{proof}

\end{document}